\newcommand {\OMIT}[1]{}               
\newtheorem{prop}{Proposition}
\renewcommand{\epsilon}{\varepsilon}
\newcommand {\br}[1]{\left(#1\right)}
\newcommand {\cbr}[1]{\left\{#1 \right\}}
\newcommand{\RR}{\mathbb{R}}    
\newcommand{\NN}{\mathbb{N}}    
\newcommand{\Dcal}{\mathcal{D}} 
\newcommand{\Lcal}{\mathcal{L}}
\newcommand{\Pcal}{\mathcal{P}}
\newcommand{\Xcal}{\mathcal{X}} 
\newcommand{\argmin}{\mathop{\mathrm{argmin}\,}}
\newcommand{\norm}[1]{\left\|{#1}\right\|}
\renewcommand\phi\varphi
\newcommand\One{\mathbf{1}}
\newcommand{\eqdef}{\ensuremath{\stackrel{\mbox{\upshape\tiny def.}}{=}}}
\begin{document}

\twocolumn[

\aistatstitle{Differentiable Deep Clustering with Cluster Size Constraints}

\aistatsauthor{ Aude Genevay \And Gabriel Dulac-Arnold \And  Jean-Philippe Vert}

\aistatsaddress{ CSAIL, MIT \And  Google Brain \And Google Brain } ]

\begin{abstract}
Clustering is a fundamental unsupervised learning approach.  Many clustering algorithms -- such as $k$-means -- rely on the euclidean distance as a similarity measure, which is often not the most relevant metric for high dimensional data such as images. Learning a lower-dimensional embedding that can better reflect the geometry of the dataset is therefore instrumental for performance. We propose a new approach for this task where the embedding is performed by a differentiable model such as a deep neural network. By rewriting the $k$-means clustering algorithm as an optimal transport task, and adding an entropic regularization, we derive a fully differentiable loss function that can be minimized with respect to both the embedding parameters and the cluster parameters via stochastic gradient descent. We show that this new formulation generalizes a recently proposed state-of-the-art method based on soft-$k$-means by adding constraints on the cluster sizes. Empirical evaluations on image classification benchmarks suggest that compared to state-of-the-art methods, our optimal transport-based approach provide better unsupervised accuracy and does not require a pre-training phase.
\end{abstract}

\section{Introduction}
Clustering is a fundamental unsupervised learning task, where, given a training set of data $(x_1, \ldots, x_n) \subset \Xcal$ and a number of classes $K$, we aim to partition the training data into $K$ non-overlapping clusters corresponding to different classes of points. We consider an extension of this problem where we additionally want to classify out-of-sample data not in the training set, i.e., we want to infer a function $c:\Xcal\rightarrow [1,K]$ that maps a given point in the data space to a class. Many clustering techniques exist, such as agglomerative clustering, when $\Xcal$ is endowed with a metric, or $k$-means, when $\Xcal=\RR^d$. However, these methods often fail when applied to complex higher-dimentional data, as the usual metric on the dataspace (e.g. euclidean in $\RR^d$) is not meaningful.

For complex data such as images or strings, recent years have witnessed significant progress in learning representations $f_\theta : \Xcal \rightarrow \RR^p$, where $f_\theta$ is typically a deep neural network (DNN) with parameters $\theta$, and  $\RR^p$ is a low dimensional space that intends to capture the underlying structure of the data \citep{Bengio2013Representation}. The representation $f_\theta$ is usually optimized to either solve a supervised task such as image classification using a training set of labeled images, or, in the absence of labels, can be used to summarize the data, e.g., by using (variational) auto-encoders or GANs \citep{Kingma2014,gan}.

Any such learned representation $f_\theta$ can be used in conjunction with any clustering algorithm to cluster the training set mapped to the representation space $ (f_\theta(x_1), \ldots, f_\theta(x_n)) \subset \RR^p$. However, there is no guarantee that the representation $f_\theta$ is ``good'' for this clustering task if it has been optimized for another task. In this work, we propose a new approach to learn a representation $f_\theta$ well adapted to solve the clustering task, in the absence of labels. This setting has been considered by several authors recently, often under the name of ``deep clustering'', and before reviewing existing approaches let us fix some notations.

\paragraph{Setting and notations.}
 We consider a dataset of $n$ points $\Dcal_n = (x_1, \dots, x_n) \in \Xcal ^n$ where $\Xcal \in \RR^d$ is the space of input data (e.g., $\Xcal=\RR^{32\times 32 \times 3}$ for 3-channel $32\times 32$ images). Our goal is to cluster the data into $K$ clusters, which might correspond to the number of classes in a supervised setting. For a dataset $\Dcal_n$ we denote by $f_\theta(\Dcal_n)$ the embedded dataset  $(f_\theta(x_1), \dots, f_\theta(x_n))$, where $f_\theta: \Xcal \rightarrow \RR^p$ is a deep neural network with parameters $\theta$ and $p<<d$.
 
 Additionally,  we denote by $\Delta_K =\cbr{z \in \RR_+^K\,:\, \sum_{k=1}^K z_k = 1}$ the probability simplex. For any integer $n\in\NN$, let $\One_n\in\RR^n$ be the $n$-dimensional vector of ones. Given two vectors $a,b\in\RR^n$, where $b_i\neq 0$ for $i\in[1,n]$, we denote by $a\oslash b \in \RR^n$ the vector with entries $(a\oslash b)_i = a_i / b_i$. For any vector or matrix $M$, we denote by  $\exp(M)$  the matrix obtained by applying the operation entrywise, e.g., $[\exp(M)]_{ij} = \exp(M_{ij})$, and by $M^\top$ the transpose of $M$.

\paragraph{A discriminative approach to clustering.}
The clustering task can be recast as a classification problem, without relying on a representation of the data, focusing directly on the clustering function $c_\theta:\Xcal\rightarrow [1,K]$. This only makes sense in cases where classes are well separated. In that case, the labels $Y=(y_1,\ldots,y_n)\in[1,K]^n$ of the training set are known, then one could train a supervised model by minimizing over $\theta$ an empirical risk of the form
$$
R(\theta,Y) = \frac{1}{n}\sum_{i=1}^n \ell\br{c_{\theta}(x_i),y_i}\,.
$$
Since we are considering an unsupervised setting, $Y$ is not available. A solution is thus to jointly optimize the above criterion over $\theta$ and $Y$ to learn both a class assignment and a classifier:
\begin{equation}\label{eq:minmin}
(\hat{\theta},\hat{Y}) \in \argmin_{\theta \in \Theta,Y \in [1,K]^n} R(\theta,Y).
\end{equation}
An obvious computational difficulty is that this problem involves the discrete variable $Y$. Besides, some kind of regularization is required in this double optimization task to prevent trivial solutions; adding constraints on $Y$ is crucial to prevent empty or overpopulated clusters. \citet{Joulin2010Discriminative} propose a convex relaxation of (\ref{eq:minmin}) in the case of linear regression with the squared loss $\ell(u,v)=(u-v)^2$ for binary problems ($K=2$). In that case, the objective function is quadratic in $Y$ and they use the standard semidefinite programming (SDP) relaxation for the matrix $YY^\top$ to approximate a minimum.

A different approach is used by \citet{chang2017deep} who recast the clustering problem as a binary classification problem: given two data points, do they belong to the same cluster? The resulting algorithm, Deep Adaptive Clustering (DAC) can be summarized as follows: each data point is mapped to a vector in the unit ball of $\RR^K$ thanks to $f_\theta$, which represents its probability to belong to each class. These probabilities are then compared with the cosine distance, which defines the similarity of the two points. The points are assumed to belong to the same class if the similarity is above a certain threshold. The parameters $\theta$ are then updated in order to increase similarity between similar points.

\paragraph{Learning a 'clustering-friendly' representation.}
However, representations are useful beyond the clustering task, e.g., to extract features or reduce the dimension of a dataset, which is why many methods in the literature rather learn a representation $f_\theta:\Xcal\rightarrow \RR^p$ such that $f_\theta(\Dcal_n)$ becomes easy to cluster. In most cases, the problem consists in minimizing a combined loss made of two terms :  (i) a ``representation loss'' $\ell_{r}(\theta)$ to ensure that the representation space is not degenerate
(ii) a ``clustering loss'' $\ell_c(\theta)$ to enforce that the learned representation $f_\theta(\Dcal)$ is relevant for the clustering task.
This yields the following optimization problem:
\begin{equation}\label{eq:combloss}\tag{$\Pcal$}
\min_\theta L(\theta) \eqdef \ell_{r}(\theta) + \lambda \ell_c(\theta)\,.
\end{equation}
While the choice of the reconstruction loss of an auto-encoder (with encoder $f_\theta$ and decoder $g_\phi$)
\begin{equation}\label{eq:lossAE}
\ell_{r}^{ae}(\theta) = \min_\phi \sum_{i=1}^n \|x_i - g_\phi\br{f_\theta(x_i)}\|^2\,,
\end{equation}
is the standard for the representation loss, these methods vary mostly in their choices of the clustering losses, the auto-encoder model, and the optimization strategies (in particular to prevent trivial solutions).

\citet{Song2013Auto} are one of the earliest to learn a representation for clustering by tweaking the objective function of a standard auto-encoder. They formulate the problem as minimizing the combined loss \eqref{eq:combloss} with the objective of $k$-means as the clustering loss:
\begin{equation}\label{eq:losskm}
\ell_c^{km}(\theta) = \min_{\mu_1,\ldots,\mu_k \in \RR^p} \cbr{ \sum_{i=1}^n \min_{j\in[1,K]}\| f_\theta(x_i) - \mu_j \|^2 }\,.
\end{equation} To optimize the objective function over the encoder parameters $\theta$, the decoder parameters $\phi$ and the cluster centers $\mu_1,\ldots,\mu_k$, they alternate one epoch of stochastic gradient descent over $(\theta,\phi)$, with one update of the cluster centers and assignments.

While most state-of-the-art methods rely on clustering objectives that are strongly linked to $k$-means, Joint Unsupervised LEarning (JULE) \citep{Yang_2016} uses a clustering loss based on ``agglomerative clustering''. Starting from clusters consisting of datapoints, the training alternates between a few steps of agglomerative clustering, i.e., merging similar clusters, and a backward pass during which the network parameters are updated to minimize the clustering loss. Although this method has a more flexible geometry, it requires building an affinity graph of the dataset after each update and is thus computationally heavy.

\citet{Xie2016Unsupervised} propose Deep Embedded Clustering (DEC)~ which starts with a pre-training phase using only the reconstruction loss $\ell_{r}(\theta,\phi)$ and then improves the clustering ability of the representation by optimizing $f_\theta$ in a self-supervised manner. Their clustering loss is the Kullback-Leibler divergence between the soft-assignments $q_{ik}$ of each point $i$ to each cluster $k$ and the square of the soft-assignments, which should push the embedding to favor harder assignments. There are several variants of DEC using more sophisticated auto-encoders and training techniques such as \citet{Guo2017Improved}. The DEPICT algorithm \citep{Dizaji2017Deep} similarly minimize the KL divergence to sharpen their assignments but also introduce a classifier $h_\beta$, that outputs $h_\beta(z)$ a probability distribution over the $k$ classes (typically, a neural network with softmax activation at the last layer). Thus the clustering loss corresponds to the possibility to discriminate the data in $k$ different classes.

The clustering loss in Deep Clustering Network (DCN) \citep{Yang2017Towards} is the objective of $k$-means in the representation space. However, minimizing the total loss $L$ over $\theta,\phi,\mu$ (cluster centers) and $\pi$ (cluster assignments) jointly is challenging. Thus, \citet{Yang2017Towards} alternate optimization in $(\theta,\phi)$ for fixed $(\mu,\pi)$, which becomes a variant of AE training, and in $(\mu,\pi)$ for fixed $(\theta,\phi)$. The Deep $k$-means (DKM) \citep{fard2018deep} algorithm uses the same loss as DCN but relax the assignment problem by replacing the cluster assignments with soft-assignments in the $k$-means objective. 
This results in a  clustering loss can be jointly minimized over $\theta$ and $\mu$, using stochastic gradient descent (SGD), and leads to state-of-the-art performance in deep clustering \citep{fard2018deep}. The latter is the approach which is closest to ours, as we also propose a fully differentiable objective based on $k$-means.

\paragraph{Clustering and optimal transport} There is a link between $k$-means clustering and optimal transport, which was first noticed in \citet{pollard1982quantization} and studied in more details in \citet{canas2012learning}. Roughly speaking, optimal transport is equivalent to a constrained formulation of $k$-means in which the cluster sizes are prescribed. This framework makes sense in a setting where the proportion of each class in a dataset is known, but no information is available at the individual level. \citet{cuturi2014fast} introduced an entropic regularization of that problem which allows for an efficient solver.

\paragraph{Contributions} Following \citet{cuturi2014fast} we exploit the connection between optimal transport and $k$-means, and rely on entropic regularization to derive a fully-differentiable clustering loss that can be used in \eqref{eq:combloss} and directly optimized with SGD. We give an insight on the effect of regularization in the cluster assignment problem, and show that the soft $k$-means loss introduced by \citet{fard2018deep} can be interpreted as an optimal transport loss with only one marginal constraint. The constraints on cluster sizes that naturally occur with optimal transport allow to enforce a prior on cluster sizes without relying to additional terms in the optimization problem. This leads to better clustering performance on benchmark datasets. 

\section{Clustering with Optimal Transport}

\paragraph{Cluster assignment as an  optimal transport problem}
Consider $n$ sample points $\cbr{x_1, \ldots, x_n} \subset \RR^d$ embedded in the representation space via $f_\theta : \RR^d \rightarrow \RR^p $, and $K$ clusters in that representation space with centers $\cbr{\mu_1, \ldots, \mu_K} \subset \RR^p$. We want to assign samples to clusters so that:
\begin{enumerate}[label=(\roman*)]
\item each sample is assigned to exactly one cluster,
\item each cluster $k=1,\ldots,K$ contains exactly $n_k$ points,
\item the total distance (in the representation space) between cluster centers and their assigned samples is minimal.
\end{enumerate}
The mathematical formulation of the above problem reads as follows:
\begin{subequations}
\begin{alignat}{2}
\ell_c^{OT} = &\!\min_{\pi \in \{0,1/n\}^{n\times K}} &\:& \sum_{i=1}^n  \sum_{k=1}^K  \| f_\theta(x_i) - \mu_k \|^2 \pi_{k,i} \label{eq:OT_reg} \tag{$OT$}\\
&\qquad \text{s.t.} &      & \pi \One_{K} = \frac{1}{n} \One_{n},\label{eq:constraint1} \tag{$c_1$}\\
&                  &      & \pi^\top \One_{n}=w,\label{eq:constraint2} \tag{$c_2$} 
\end{alignat}
\end{subequations}
where  $w = (\frac{n_1}{n},\dots,\frac{n_k}{n}) \in \Delta_K $ is the vector of cluster proportions.

The above problem is known as optimal transport between the discrete measure $\alpha \eqdef \frac{1}{n}\sum_{i=1}^n \delta_{f_\theta(x_i)} $ and $\beta = \sum_{k=1}^K \frac{n_k}{n} \delta_{\mu_k}$. If we remove the constraint on cluster sizes \eqref{eq:constraint1}, it boils down to the objective function of the $k$-means problem with cluster centers $\cbr{\mu_1, \ldots, \mu_K}$ \citep{pollard1982quantization}. 

\begin{algorithm}[t]
\caption{Sinkhorn's Algorithm for Reg. OT}\label{algo:sink}
\begin{algorithmic}[1]
\State \textbf{Parameters} $\epsilon$ ; $n_{iter}$
\State \textbf{Input} $(f_\theta(x_i))_{i=1\dots n}$ ;  $(\mu_k)_{k=1\dots K}$ ;  $w$ 
 \State $C_{ik} = \norm{f_\theta(x_i) - \mu_k}^2 \quad \forall \: i,k$
 \State $M = \exp(-C/\epsilon)$
 \State Initialize $b \leftarrow \One_K$
 \For{$j  = 1,2,\dots, n_{iter}$}
 	\State $a \leftarrow \frac{1}{n}\frac{\One_n}{M b} $
 	\State $b \leftarrow \frac{w}{M^\top a} $
 \EndFor
 	\State \textbf{Return} $\pi_{ik} = a_i M_{ik} b_k \quad \forall \: i,k $
\end{algorithmic}
\end{algorithm}

\paragraph{Entropic regularization of  optimal transport} Solving optimal transport is computationally expensive as it requires solving a large linear program and a common workaround in the literature is to regularize the problem with entropy \citep{cuturi2013sinkhorn}. The regularized problem then reads as follows:
\begin{subequations}
\begin{alignat}{2}
\ell_c^{OT_\epsilon} = &\!\min_{\pi \in [0,1]^{n\times K}} &\:& \sum_{i=1}^n  \sum_{k=1}^K  \| f_\theta(x_i) - \mu_k \|^2 \pi_{k,i} \nonumber \\
&&& \qquad + \epsilon \pi_{k,i}\left(\log(\pi_{k,i}) -1\right) \label{eq:OT} \tag{$OT_\epsilon$}\\
&\qquad \text{s.t.} &      & \pi \One_{K} = \frac{1}{n}\One_{n},\label{eq:constraint1reg} \tag{$c_1$}\\
&                  &      & \pi^\top \One_{n}=w,\label{eq:constraint2reg} \tag{$c_2$} 
\end{alignat}
\end{subequations}
The addition of entropy allows to solve the problem with a much faster iterative algorithm, called Sinkhorn's algorithm, whose iterations are summarized in Algorithm~\ref{algo:sink}. Although this fast solver is the main reason why regularized  optimal transport became routinely used in machine learning tasks, recent papers have exploited the fact that it also leads to a differentiable loss, whose gradients can be easily computed with backpropagation through Sinkhorn iterations \citep{genevay2017learning,salimans2018improving}.

It is known that a linear program reaches its maximum on the vertices, which is why the Optimal Transport problem is equivalent to its relaxation to the simplex.  The addition of entropy will move the solution away from the optimal vertex, towards the center of the constraint polytope thus yielding smoother assignments \citep{peyre2019computational}. This is formalized in the proposition below:
\begin{prop}
Consider the regularized optimal transport problem \eqref{eq:OT_reg}, and the optimal assignment $\pi_\epsilon$.\\
When $\epsilon \rightarrow 0$ :  
\begin{itemize}
\item $\pi_\epsilon \rightarrow \pi$ (the solution of \eqref{eq:OT_reg}) \item 
$\ell_c^{OT_\epsilon} \rightarrow \ell_c^{OT}$, 
\end{itemize}
When $\epsilon \rightarrow \infty$ : 
\begin{itemize}
\item  $\pi_\epsilon \rightarrow \frac{1}{n}\One_n w$ (i.e., each point is assigned to all clusters according to global proportions w) 
\item $\ell_c^{OT_\epsilon} \rightarrow \frac{1}{n} \sum_{i=1}^n  \sum_{k=1}^K w_k \| f_\theta(x_i) - \mu_k \|^2$.
\end{itemize}
\end{prop}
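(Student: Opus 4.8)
The plan is to treat the regularized objective as a perturbation of a linear program over a fixed compact feasible set, and to exploit strict convexity of the entropy term. Throughout, write $C_{ik}\eqdef\norm{f_\theta(x_i)-\mu_k}^2$, let $a\eqdef\frac1n\One_n$ and $b\eqdef w$ denote the prescribed marginals, and let $\Pi$ be the transport polytope $\cbr{\pi\in\RR_+^{n\times K}:\pi\One_K=a,\ \pi^\top\One_n=b}$, which is nonempty, convex, and compact. Set $E(\pi)\eqdef\sum_{i,k}\pi_{ik}(\log\pi_{ik}-1)$, with the convention $0\log 0=0$, so that $E$ is continuous and strictly convex on $\Pi$, and the regularized objective is $F_\epsilon(\pi)=\inp{C,\pi}+\epsilon E(\pi)$. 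Since $F_\epsilon$ is strictly convex and continuous on the compact set $\Pi$, the minimizer $\pi_\epsilon$ exists and is unique for every $\epsilon>0$; this well-posedness is what makes the two limits meaningful.

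For the regime $\epsilon\to 0$ I would first establish convergence of the values by a sandwich argument. Because $E$ is continuous on the compact set $\Pi$, it is bounded, say $|E|\le B$; optimality of $\pi_\epsilon$ tested against any fixed optimal plan $\pi$ of the unregularized problem gives $\ell_c^{OT_\epsilon}\le\inp{C,\pi}+\epsilon E(\pi)\le\ell_c^{OT}+\epsilon B$, while feasibility of $\pi_\epsilon$ gives $\ell_c^{OT_\epsilon}\ge\inp{C,\pi_\epsilon}+\epsilon\min_\Pi E\ge\ell_c^{OT}-\epsilon B$. Letting $\epsilon\to 0$ yields $\ell_c^{OT_\epsilon}\to\ell_c^{OT}$. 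For the plans I would invoke compactness of $\Pi$: any sequence $\pi_{\epsilon_j}$ admits a convergent subsequence with limit $\bar\pi\in\Pi$, and the value convergence together with $\epsilon E(\pi_{\epsilon_j})\to 0$ forces $\inp{C,\bar\pi}=\ell_c^{OT}$, so $\bar\pi$ is an optimal transport plan. The step I expect to be the main obstacle is here: the unregularized LP need not have a unique optimizer, so in general one only obtains convergence to the \emph{maximum-entropy} optimal plan, and the clean statement $\pi_\epsilon\to\pi$ should be read under uniqueness of $\pi$ (generic for the costs $C_{ik}$ arising from squared distances in general position), which I would state explicitly.

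For the regime $\epsilon\to\infty$ I would rescale and pass to the dominant term. Minimizing $F_\epsilon$ is equivalent to minimizing $\frac1\epsilon\inp{C,\pi}+E(\pi)$, and since $\inp{C,\cdot}$ is bounded on $\Pi$ the term $\frac1\epsilon\inp{C,\pi}$ tends to $0$ uniformly on $\Pi$; by stability of the minimizer of a strictly convex functional under a uniformly vanishing perturbation, $\pi_\epsilon$ converges to the unique minimizer of $E$ over $\Pi$. I would identify that minimizer through the Lagrange/KKT conditions: stationarity gives $\log\pi_{ik}=\lambda_i+\nu_k$, i.e. a product form $\pi_{ik}=u_iv_k$, and imposing the two marginal constraints (using $\sum_k w_k=1$ and $\sum_i a_i=1$) pins down $\pi_{ik}=a_ib_k=\frac{w_k}{n}$, which in matrix form is the outer product $\frac1n\One_n w$. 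Strict convexity of $E$ certifies this is the unique minimizer, giving $\pi_\epsilon\to\frac1n\One_n w$.

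Finally, the limiting value follows from continuity: $\inp{C,\pi_\epsilon}\to\inp{C,\tfrac1n\One_n w}=\frac1n\sum_{i,k}w_k\norm{f_\theta(x_i)-\mu_k}^2$, which is exactly the claimed limit. I would flag a subtlety worth making explicit: this is the convergence of the \emph{transport-cost} part of the objective, the quantity actually used as the clustering loss; the full regularized value $F_\epsilon(\pi_\epsilon)$ also contains $\epsilon E(\pi_\epsilon)$, and since $E(\tfrac1n\One_n w)<0$ this term does not vanish as $\epsilon\to\infty$. Thus the stated limit should be understood as the unregularized cost evaluated at the limiting plan, consistent with the $\epsilon\to 0$ case where the two coincide because $\epsilon E(\pi_\epsilon)\to 0$.
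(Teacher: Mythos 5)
Your proof is correct, but it is a genuinely different route from the paper's: the paper gives no argument at all, simply deferring to Theorem~1 of \citet{genevay2017learning} (stated there for Sinkhorn losses between measures), whereas you give a self-contained, finite-dimensional proof via compactness of the transport polytope, strict convexity of $E$, a value sandwich for $\epsilon\to 0$, and identification of the entropy minimizer $ab^\top$ through Lagrange stationarity for $\epsilon\to\infty$. The citation buys brevity and inherits the general result; your version buys transparency and, importantly, surfaces two points the proposition glosses over. First, $\pi_\epsilon\to\pi$ genuinely requires uniqueness of the LP solution --- in general one gets convergence to the maximum-entropy optimal plan (your subsequential argument alone yields only that limit points are optimal; full convergence under non-uniqueness needs the extra entropy-comparison step you allude to, which the cited theorem and the paper both elide). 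Second, as written in the paper $\ell_c^{OT_\epsilon}$ includes the term $\epsilon\,\pi_{k,i}(\log \pi_{k,i}-1)$, so the full regularized value tends to $-\infty$ as $\epsilon\to\infty$ since $E\bigl(\tfrac1n\One_n w^\top\bigr)<0$; the stated limit is correct only for the transport-cost part $\inp{C,\pi_\epsilon}$, which is precisely the reading in \citet{genevay2017learning}, and your explicit flag makes the statement precise. One small gap to close: before writing $\log\pi_{ik}=\lambda_i+\nu_k$ you should note that the minimizer of $E$ over $\Pi$ is strictly positive (the derivative of $x(\log x-1)$ blows up at $0$), so the stationarity condition is legitimate.
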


\begin{proof} The proposition is an adaptation of Theorem 1 in \citep{genevay2017learning} to our clustering setting.
\end{proof}

The choice of $\epsilon$ is a crucial question: when epsilon gets smaller -- i.e. when we get closer to 'true' Optimal Transport -- Sinkhorn's algorithm requires more iterations to converge (see e.g. \citep{peyre2019computational}) meaning that a better approximation of  optimal transport comes at a heavy computational price. However, it has recently been proved that approximating  optimal transport from samples -- which is typically the case in machine learning, it is actually beneficial to use $\epsilon$ not too small to avoid the curse of dimension from which optimal transport suffers \citep{genevay2018sample}.

\begin{algorithm}[t]
\caption{OT-based Deep Clustering}\label{algo:cluster}
\begin{algorithmic}[1]
\State \textbf{Parameters} $K$, $n_{pre-train}$, $n_{epochs}$, m 
\State \textbf{Input} Dataset $(x_1,\dots, x_n)$, cluster proportions $w$
\State \textbf{Initialize} $f_\theta$ (encoder) and $g_\phi$ (decoder) with random weights
\State \textbf{Initialize} centers $\mu$ with $k$-means on embedded images $(f_\theta(x_1),\dots, f_\theta(x_n))$
\For{$i=1$ to $n_{pre-train}$} \quad (pre-training)
\For{$j=1$ to $n/m$}
\State $\Dcal_j = (x^{(j)}_1, \dots, x^{(j)}_{m})$ batch of size $m$
\State Compute loss $\ell_r^{ae}(\theta,\phi)$
\State Update $\theta$, $\phi$ with a gradient step
\EndFor
\EndFor
\For{$i=1$ to $n_{epochs}$} \quad (Training)
\For{$j=1$ to $n/m$}
\State $\Dcal_j = (x^{(j)}_1, \dots, x^{(j)}_{m})$ batch of size $m$
\State Compute $\pi(f_\theta(\Dcal_j),\mu,w)$ with Sinkhorn
\State Compute loss $\ell_r^{ae}(\theta,\phi)+\ell_c^{OT_\epsilon}(\theta,\mu)$
\State Update $\theta$, $\phi$ and $\mu$ with a gradient step
\EndFor
\EndFor

\For {$i=1$ to $n$} \quad (Final Clustering)
\State Assign $x_i$ to $k_i = \argmin_k \norm{f_\theta(x_i)-\mu_k}^2$ 
\EndFor
\end{algorithmic}
\end{algorithm}

\begin{figure*}[ht]
\centering
\includegraphics[width =.49\textwidth]{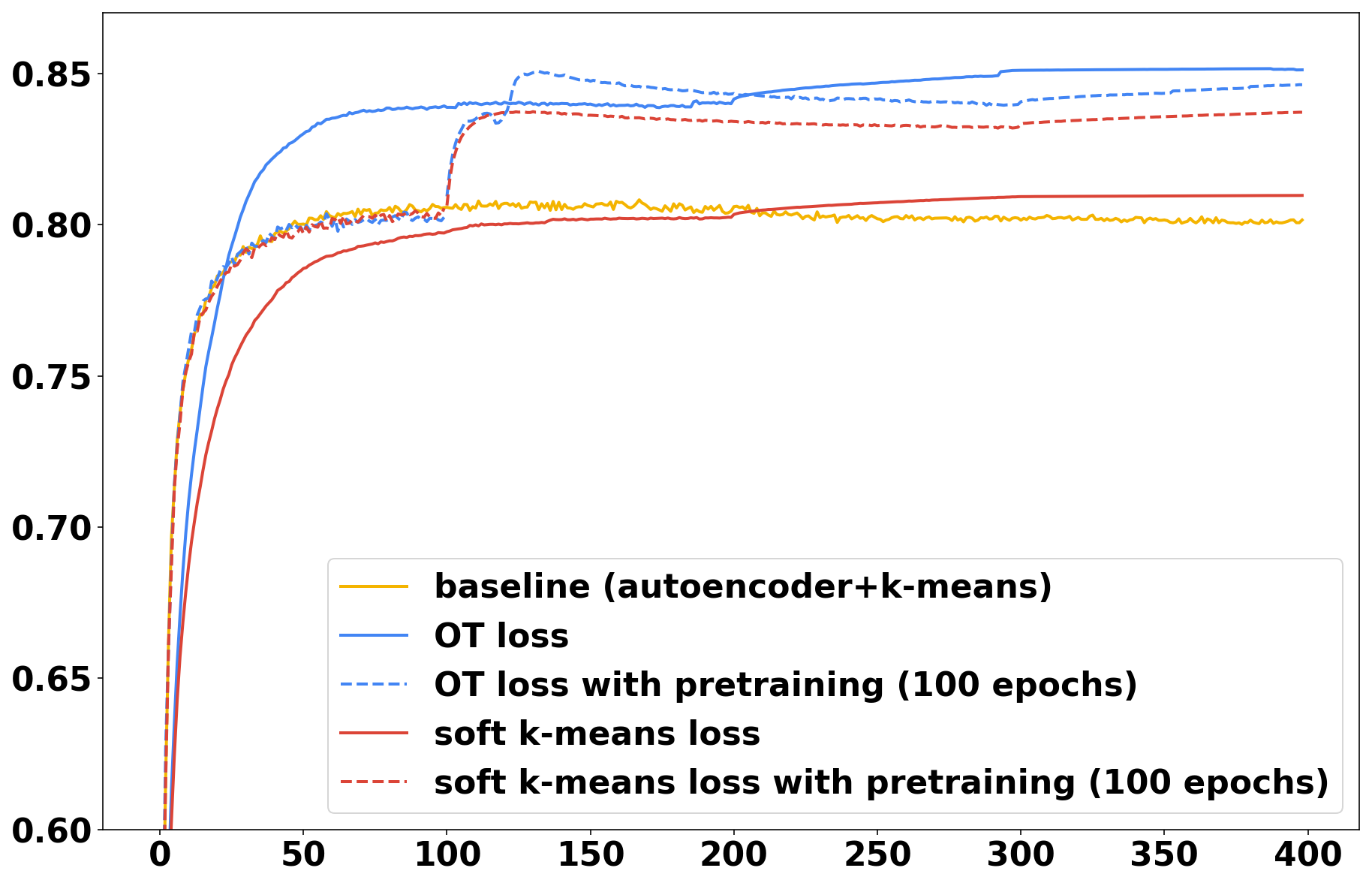}
\includegraphics[width =.49\textwidth]{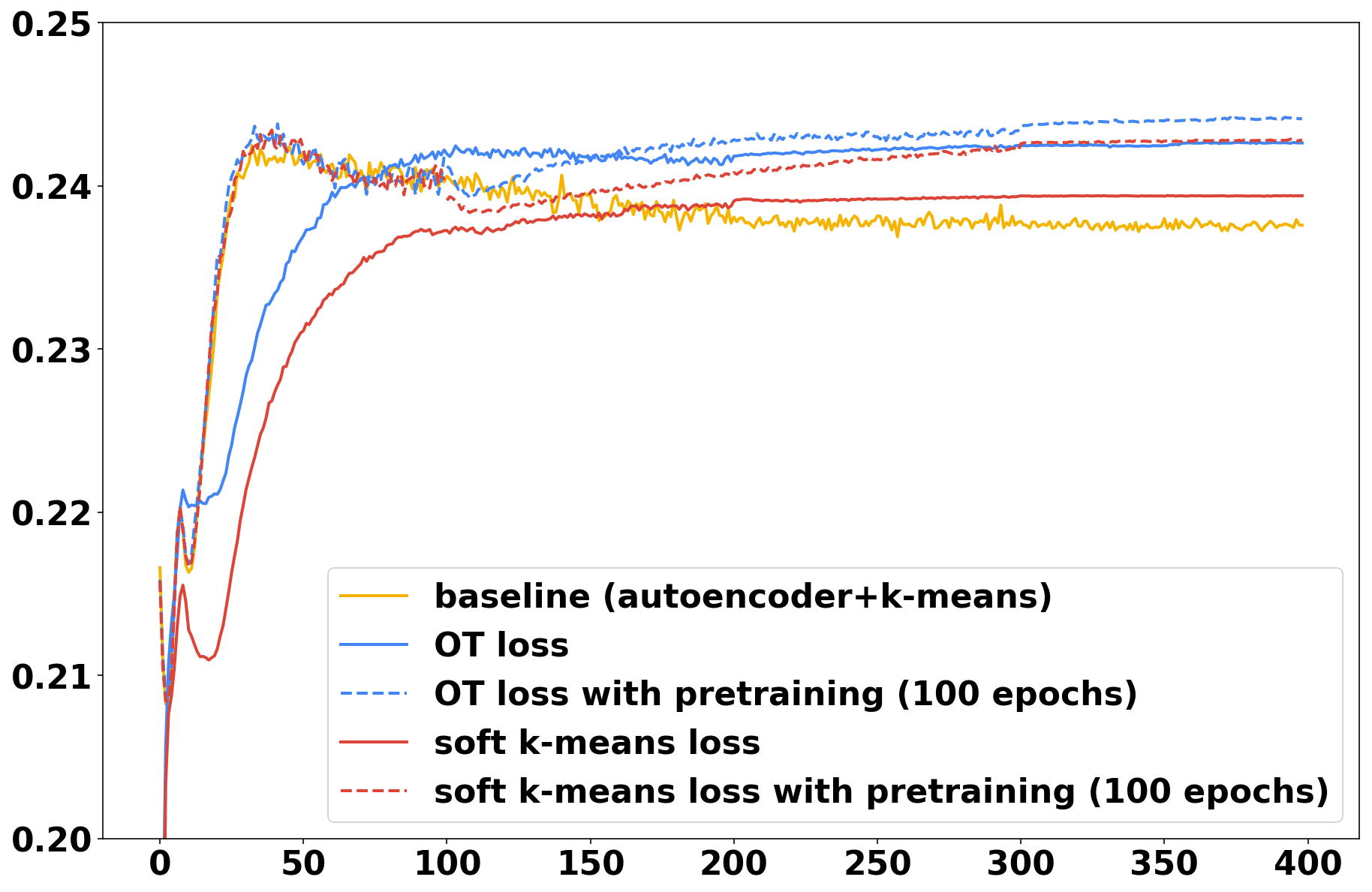}
\caption{Accuracy on MNIST (left) and CIFAR10 (right)}
\label{fig:acc}
\end{figure*}

\paragraph{Link with soft-assignments in $k$-means}
The optimal transport formulation includes two marginal constraints, one being that each sample is assigned to exactly one cluster and the other being the cluster size. The latter constraint can be omitted to obtain an objective which is that of $k$-means. When regularizing the optimal transport problem with only one marginal constraint, we thus get a differentiable $k$-means objective.
\begin{prop}\label{prop:softkmeans}
Consider the variant of entropy-regularized Optimal Transport with only one marginal constraint (i.e., no prior on cluster sizes):
\begin{subequations}
\begin{alignat}{2}
 &\!\min_{\pi \in [0,1]^{n\times K}} &\:& \sum_{i=1}^n  \sum_{k=1}^K  \| f_\theta(x_i) - \mu_k \|^2 \pi_{k,i} \nonumber \\
&&& \qquad + \epsilon \pi_{k,i}\left(\log(\pi_{k,i}) -1\right)  \nonumber \\ 
&\qquad \text{s.t.} &      & \pi \One_{K} = \frac{1}{n}\One_{n}, \nonumber 
\end{alignat}
\end{subequations}
 then the optimal assignment $\pi^*$ is given by 
 \begin{equation}\pi_{k,i}^* = \frac{e^{- \norm{f_\theta(x_i)-\mu_k}^2}/\epsilon}{n \sum_{k'=1}^K e^{- \norm{f_\theta(x_i)-\mu_{k'}}^2}/\epsilon}. \label{eq:optimal_assign}
 \end{equation}
\end{prop}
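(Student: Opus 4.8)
The plan is to solve this constrained convex program with Lagrangian duality / KKT conditions, exploiting the fact that the entropic penalty makes the objective strictly convex and smooth on the positive orthant. First I would observe that the objective decomposes as a sum over pairs $(i,k)$ and that the only coupling between the $\pi_{k,i}$ comes through the $n$ equality constraints $\sum_{k=1}^K \pi_{k,i} = 1/n$, one per sample $i$. I would therefore attach a single multiplier $f_i$ to each such constraint and form the Lagrangian
$$L(\pi,f) = \sum_{i=1}^n \sum_{k=1}^K \left[ C_{ik}\,\pi_{k,i} + \epsilon\,\pi_{k,i}\br{\log \pi_{k,i} - 1} \right] - \sum_{i=1}^n f_i\br{\sum_{k=1}^K \pi_{k,i} - \frac{1}{n}},$$
where I abbreviate $C_{ik} \defin \norm{f_\theta(x_i)-\mu_k}^2$.

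Next comes the stationarity step. Differentiating with respect to $\pi_{k,i}$ and using $\frac{d}{dt}\sqb{t(\log t - 1)} = \log t$, the first-order condition is $C_{ik} + \epsilon\log \pi_{k,i} - f_i = 0$, which yields the closed form $\pi_{k,i} = \exp(f_i/\epsilon)\exp(-C_{ik}/\epsilon)$. Substituting this back into the constraint $\sum_{k=1}^K \pi_{k,i} = 1/n$ pins down the multiplier through $\exp(f_i/\epsilon) = \br{n\sum_{k'=1}^K \exp(-C_{ik'}/\epsilon)}^{-1}$, and reinserting this expression gives exactly formula \eqref{eq:optimal_assign}.

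To turn this into a rigorous argument I would verify that the stationary point is the global minimiser: the feasible set is convex (an intersection of the box $[0,1]^{n\times K}$ with affine equality constraints) and the objective is strictly convex, since $t \mapsto t(\log t - 1)$ is strictly convex and the linear cost term does not affect convexity, so the minimiser is unique and the KKT conditions are sufficient. The one point that deserves care — and the main obstacle to a fully clean write-up — is the box constraint $\pi \in [0,1]^{n\times K}$, which I deliberately ignored when forming the Lagrangian with equality multipliers only. I would resolve this by noting that the entropy acts as a barrier forcing $\pi_{k,i} > 0$, and that the candidate solution automatically satisfies $0 < \pi_{k,i} \le 1/n \le 1$; hence the inequality constraints are inactive at the optimum and can legitimately be dropped, which validates the reduced Lagrangian used above.
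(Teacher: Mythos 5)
Your proof is correct and takes essentially the same route as the paper's: form the Lagrangian with one multiplier per row-sum constraint, use the stationarity condition $C_{ik} + \epsilon \log \pi_{k,i} - f_i = 0$ to get $\pi_{k,i} = e^{f_i/\epsilon} e^{-C_{ik}/\epsilon}$, and eliminate the multiplier via the constraint to recover \eqref{eq:optimal_assign}. The paper's proof is a terser sketch of exactly this argument; your extra observations --- strict convexity making the KKT point the unique global minimiser, and the entropic barrier keeping the candidate in $(0, 1/n]$ so the box constraints are inactive --- fill in details the paper leaves implicit.
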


\begin{proof} This is a convex function of $\pi$ with linear constraints.
Denoting by $\lambda$ the Lagrange multiplier for the constraint, the Lagrangian is :
\begin{align}
\Lcal(\pi, \lambda) = & \sum_{i=1}^n  \sum_{k=1}^K  \| f_\theta(x_i) - \mu_k \|^2 \pi_{k,i}  \\ &+ \epsilon \pi_{k,i} \left(\log(\pi_{k,i}) -1\right) \\ &+ \sum_{i=1}^n \lambda_i (\sum_{k=1_K}\pi_{ik} - \frac{1}{n})
\end{align}
The first order conditions of the Lagrangian gives \eqref{eq:optimal_assign}.
\end{proof}

 The solution of that problem corresponds exactly to the differentiable $k$-means loss introduced by \citet{fard2018deep}, which the authors motivated by replacing the $\min$ in the $k$-means objective \eqref{eq:losskm} with the softmin function. Hence Proposition~\ref{prop:softkmeans} provides a new interpretation of DKM, and shows that the approach we propose below generalizes DKM by adding constraints on the cluster sizes.

\begin{figure*}
\centering
\includegraphics[width =.49\textwidth]{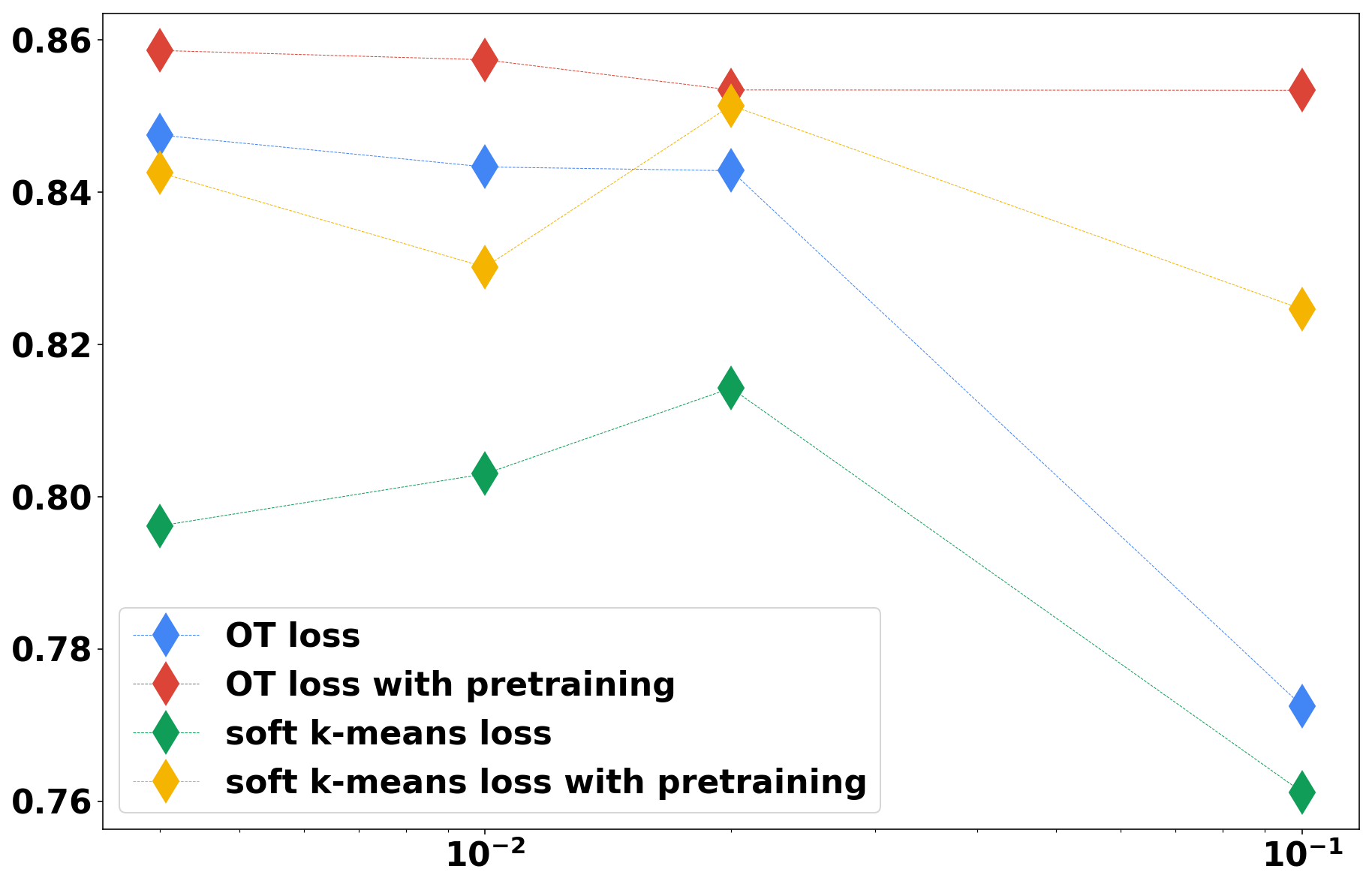}
\includegraphics[width =.49\textwidth]{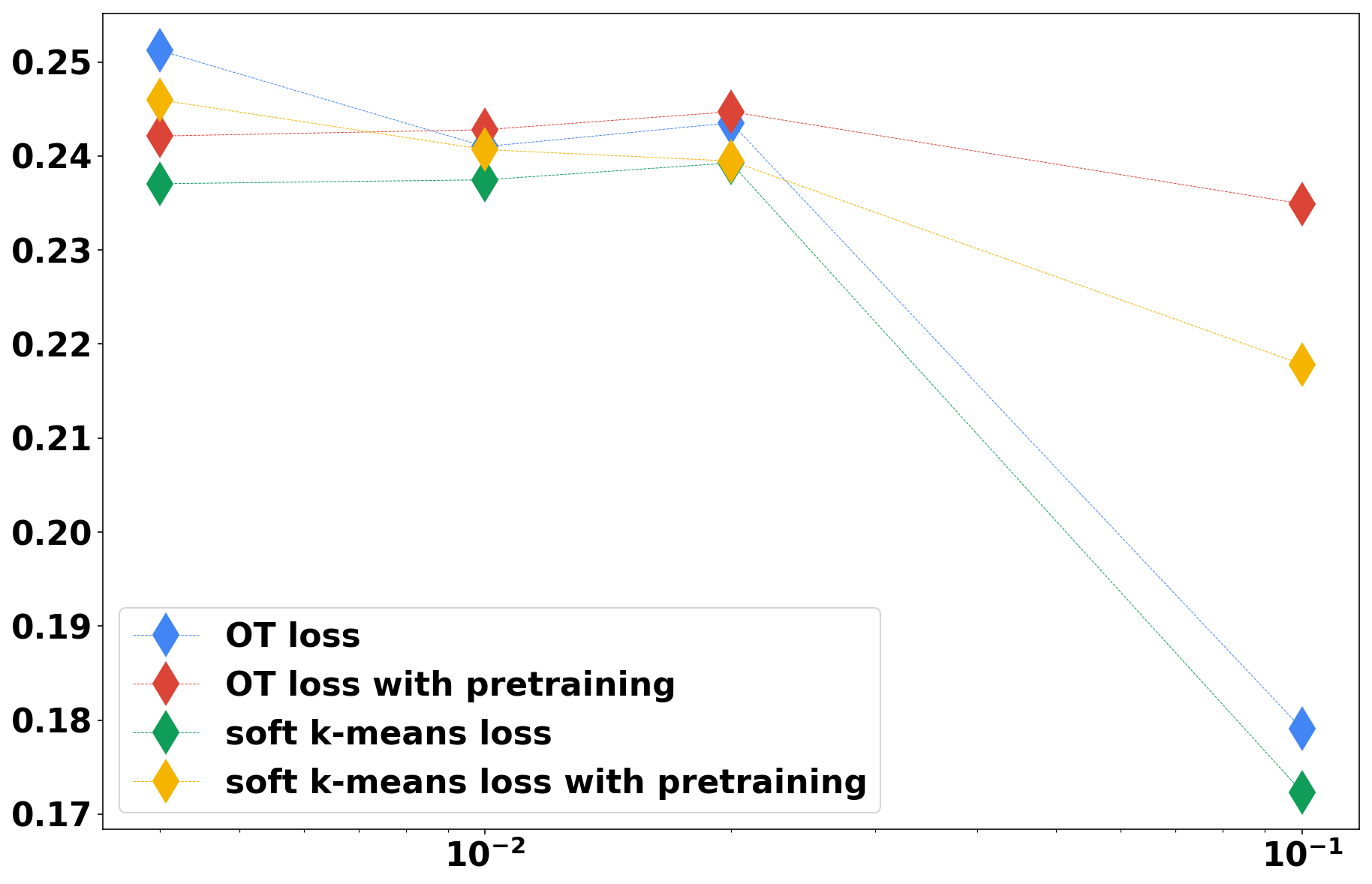}
\caption{Accuracy after 200 epochs (averaged over 5 runs) as a function of $\epsilon$ for MNIST (left) and CIFAR10 (right)}
\label{fig:eps}
\end{figure*}

\paragraph{Solving the clustering problem}
Several papers in the literature use the $k$-means objective as the clustering loss $\ell_c$ in \eqref{eq:combloss}. We propose to replace this by the objective function of regularized optimal transport, which allows to:
\begin{enumerate}[label=(\roman*)]
\item enforce a prior on the cluster proportions,
\item obtain a loss that is differentiable with respect to both the cluster centers $(\mu_k)_k$ and the embedding parameters $\theta$.
\end{enumerate}
The clustering problem \eqref{eq:combloss} becomes:
\begin{equation}
\min_{\theta,\phi,\mu} \ell_{r}^{ae}(\theta,\phi) + \ell_c^{OT_\epsilon}(\theta,\mu)\,,
\end{equation}
where $\ell_{r}^{ae}$ is the reconstruction loss of the auto-encoder defined in \eqref{eq:lossAE} and $\ell_c^{OT_\epsilon}$ is the result of the regularized optimal transport problem \eqref{eq:OT_reg}.  This function is differentiable with respect to all its variables, and can be minimized with SGD. We outline the full learning procedure in Algorithm~\ref{algo:cluster}.

\section{Experiments}

We assess the efficiency of our optimal transport-based clustering loss on two classical benchmark datasets for image classification, MNIST and CIFAR10. We compare it against a simple baseline which consists in learning an embedding with an auto-encoder and then running $k$-means on the embedded data, and against the state-of-the-art DKM method  of \citet{fard2018deep} based on the soft $k$-means loss, which already proved its superiority over existing methods.

\paragraph{Experimental setting}
For the MNIST dataset, we follow mutiple examples from the litterature by using a fully connected auto-encoder with 500-250-10-250-500 structure and ReLu activation. For the CIFAR dataset, we use as an encoder a standard convolutional network using ReLu activation, with three convolution layers of respective depths 32, 64, 128, respective kernel sizes 5, 5, 3, and common stride of 2, followed by a fully connected layer to a latent space of dimension 10. In both cases we use batches of size $m=300$. The gradient updates are made with the Adam algorithm and the standard learning rate from TensorFlow (0.001) with step decay, as there should not be parameter tuning in an unsupervised setting.
The algorithm used for training is summarized in Algorithm \ref{algo:cluster}. We also run $k$-means on raw pixels, to give an idea of how much structure is induced in the data by the embedding $f_\theta$. Following usual guidelines regarding regularization for optimal transport, we set $\epsilon = 10^{-2}$ which gives a good enough approximation of optimal transport without requiring too many Sinkhorn iterations. We show in Fig.~\ref{fig:eps} that the final performance is robust to the choice of $\epsilon$ as long as it is not too large.

\begin{table*}[ht]
\begin{center}
\begin{tabular}{|c|c|c|c|c|c|c|}
 \hline &&&&&&\\
 & $k$-means & AE + $k$-means & soft $k$-means & soft $k$-means (p) & OT & OT (p)   \\ &&&&&&\\ \hline &&&&&&\\
 MNIST & 0.513 & $0.801(\pm 0.025)$ & $0.810(\pm 0.033)$ & $0.837(\pm 0.032)$ & $0.851(\pm 0.032)$ & $0.846(\pm 0.040)$ \\
  & & $[0.765, 0.912]$ & $[0.677, 0.883]$ &   $[0.781, 0.923]$ & $[0.771, 0.932]$ & $[0.759, 0.928]$

 \\ &&&&&&\\ \hline &&&&&&\\
CIFAR10 & 0.206 & $0.237(\pm 0.005)$ &  $0.239(\pm 0.008)$ &   $0.243(\pm 0.010)$ & $0.243(\pm 0.007)$ & $0.244(\pm 0.009)$  \\
 & & $[0.230, 0.259]$ & $[0.227, 0.257]$  & $[0.227, 0.261]$ &  $[0.232, 0.260]$ & $[0.230, 0.266]$

  \\ &&&&&&\\ \hline

\end{tabular}
\end{center}
\caption{Average accuracy from clustering on CIFAR and MNIST datasets (over 50 runs) with standard deviation and max and min accuracy over the runs (second line). (p) means `with pre-training'} \label{tab:results}
\end{table*}

\OMIT{
\begin{table}
\begin{center}
\begin{tabular}{|c|c|c|c|c|}
 \hline &&\\
 & MNIST & CIFAR10  \\ &&\\ \hline &&\\
 $k$-means (raw ) & 0.513 & 0.206 \\ &&\\\hline &&\\
 AE + $k$-means & $0.801(\pm 0.025)$
 & $0.237(\pm 0.005)$ \\
 & $[0.765, 0.912]$
 & $[0.230, 0.259]$
 \\ &&\\ \hline &&\\
 soft $k$-means & $0.810(\pm 0.033)$
 & $0.239(\pm 0.008)$ \\
 & $[0.677, 0.883]$
& $[0.227, 0.257]$

  \\ &&\\\hline&&\\
 soft $k$-means  & $0.837(\pm 0.032)$
 & $0.243(\pm 0.010)$ \\ (pre-trained)
 & $[0.781, 0.923]$
 & $[0.227, 0.261]$

  \\&&\\ \hline&&\\
 OT & $0.851(\pm 0.032)$&$0.243(\pm 0.007)$
 \\
 & $[0.771, 0.932]$
& $[0.232, 0.260]$\\ &&\\ \hline &&\\
 OT & $0.846(\pm 0.040)$
 & $0.244(\pm 0.009)$ \\ (pre-trained)
 &$[0.759, 0.928]$
 &$[0.230, 0.266]$
  \\ &&\\ \hline

\end{tabular}
\end{center}
\caption{Average accuracy from clustering on CIFAR and MNIST datasets (over 50 runs) with standard deviation and max and min accuracy over the runs (second line).} \label{tab:results}
\end{table}}

\paragraph{Evaluation}
After each epoch, we evaluate the different methods by computing the accuracy given by matching clusters to classes through the following formula:
$$ accuracy = \max_{\sigma \in \mathfrak{S}}\sum_{i=1}^n \One_{y_i = \sigma(k_i)}, $$
where $y_i$ and $k_i$ are respectively the class label and the cluster index associated to $x_i$ and  $\mathfrak{S}$ is the set of permutations of $\{1\dots K\}$. For the `AE + $k$-means' method, the cluster assignment is made by running $k$-means on the embedded data, while for both `soft $k$-means' and `OT', since these methods also learn the cluster centers $(\mu_1,\dots,\mu_K)$ we assign the point $x_i$ to cluster $k_i$ such that $k_i = \argmin_k \norm{f_\theta(x_i)-\mu_k}^2$.
The optimal matching between clusters and classes is done via the Hungarian algorithm, as in the literature.

The evolution of accuracy during training for all three methods (auto-encoder + kmeans, soft $k$-means, optimal transport) is plotted in Fig.~\ref{fig:acc}. The curves are averaged over 50 runs. The final accuracies are reported in Table~\ref{tab:results} along with the standard deviations. We can seen that the optimal transport loss achieves superior accuracy, but mostly doesn't need to rely on pre-training to get good performance, contrarily to soft $k$-means, whose performance is only slightly above the baseline without pre-training. To assert the statistical significance of the superiority of optimal transport in this framework, we further run a Welch's t-test over the final accuracies in the 50 runs. Without pre-training, we find out that optimal transport is significantly better than soft $k$-means (p-value of $0.0067$ for CIFAR10 and $10^{-12}$ for MNIST). With pre-training, optimal transport is still significantly better than soft $k$-means for MNIST at the 10\% level (p-value of 0.10) but it's not the case for CIFAR10 (p-value of 0.33). Note that for both datasets, pre-training did not yield significantly better performance for optimal transport (p-values $> 0.2$), while it significantly improves soft $k$-means (p-values $< 0.05$).  

Fig.~\ref{fig:eps} displays the accuracy after 200 epochs for each method, as a function of $\epsilon$. We see that the competitive advantage of OT over soft $k$-means is robust to the choice of $\epsilon$ as long as it is not too large. Incidentally, the methods with pre-training are more robust to large values for $\epsilon$. Note that these curves are averaged over only 5 runs and thus can not be regarded as statistically significant, they merely serve as a proof of robustness of the method to the chosen parameter.

\section{Conclusion and discussion}

In this paper we propose a new fully differentiable framework for deep clustering, based on regularized optimal transport, which generalizes the recently proposed approach of \citet{fard2018deep} based on soft-$k$-means. Its main advantage over competing methods is its ability to naturally enforce a prior on class proportions. This significantly improves performance on datasets with well balanced classes, without relying on pre-training of the embedding. 

In our experiments we observed a benefit over soft-$k$-means in situations where the classes are balanced. An interesting direction to explore is to extend the application of our method when the prior knowledge on cluster size is not uniform. This may be relevant in cases, for example, when an expert provides a rough estimate of the proportion of different classes, such as the proportion of cancer cells in an histopathological image. While our formulation lends itself naturally to non-uniform cluster proportions, we observed in preliminary experiments that it performs poorly if no care is taken to make sure that the cluster size constraints ($w$ in Algorithm~\ref{algo:cluster}) is coherent with the set of cluster centers (vectors $\mu_k$ in Algorithm~\ref{algo:cluster}). We found in particular that this is often poorly achieved by the initialization of the centers via $k$-means (step 4 in Algorithm~\ref{algo:cluster}. For instance, consider the case where we have two clusters -- say images of ones and images of twos -- and we know that we should have 20 \% of the former and 80 \% of the latter. However, if the $k$-means initialization of the centers puts the first center in the middle of twos and the second center in the middle of ones, the algorithm will try to enforce a 80\% proportion on ones and 20\% proportion on twos. Generally speaking, to ensure that we are enforcing the cluster proportions properly, some sort of matching has to be done  before the learning phase between the indexes of the clusters and the indexes of the classes. This could be done in a supervised way, by using an example from each class to initialize the centers. This extension falls in the framework of one-shot learning, with an additional knowledge on class proportions.

Another extension of our method would be relax the strict constraint of cluster proportions to a soft constraint, using for example unbalanced optimal transport with a relaxed version of the Sinkhorn algorithm which penalizes the marginal constraints instead of enforcing them strongly \citep{chizat2018scaling}. This may be particularly relevant when small batches are considered, as one would not expect the composition of each batch to perfectly reflect the overall composition.

Finally, we note that our formulation of the clustering problem with optimal transport is closely linked to that of \citet{Dulac-Arnold2019Deep}, who propose an algorithm to learn a classifier from label proportions in mini-batches. The main difference is that instead of using $f_\theta$ to parametrize an embedding, the authors directly use it to predict the probability of belonging to class $k$. The last layer of $f_\theta$ is a softmax, and thus the term $\| f_\theta(x_i) - \mu_k \|^2$ is replaced by $f_{\theta}(x_i)_k$. Besides, they loosen the marginal constraint prescribing the clusters proportions by using unbalanced optimal transport. The latter can also be implemented in our proposed method, as it consists in using a variation of Sinkhorn's algorithm \citep{chizat2018scaling}. However, the performance that they report for large batch sizes in lower than what we report for the fully unsupervised task in our experiments. This would make our optimal transport approach a good candidate for the learning with proportions problem.

\bibliography{bibli}
\bibliographystyle{plainnat}

\end{document}